\documentclass[11pt,a4paper]{article}

\usepackage[margin=1in]{geometry}
\usepackage{amsmath,amssymb,amsthm}
\usepackage{mathtools}
\usepackage{booktabs}
\usepackage{array}
\usepackage{tabularx}
\usepackage{multirow}
\usepackage{xcolor}
\usepackage{hyperref}
\usepackage{natbib}
\usepackage[nopatch=footnote]{microtype}
\usepackage{setspace}
\usepackage{enumitem}
\usepackage{thmtools}
\usepackage{mdframed}
\usepackage{caption}
\usepackage{float}

\definecolor{mapcablue}{RGB}{46,95,138}
\definecolor{boxbg}{RGB}{235,242,250}
\definecolor{remarkbg}{RGB}{245,245,245}
\definecolor{proofbg}{RGB}{250,250,250}

\hypersetup{
  colorlinks=true,
  linkcolor=mapcablue,
  citecolor=mapcablue,
  urlcolor=mapcablue,
  pdftitle={Metric-Aware Principal Component Analysis (MAPCA)},
  pdfauthor={[Author]}
}

\newmdtheoremenv[
  backgroundcolor=boxbg,
  linecolor=mapcablue,
  linewidth=1.5pt,
  topline=false, rightline=false, bottomline=false,
  innerleftmargin=12pt, innerrightmargin=12pt,
  innertopmargin=8pt, innerbottommargin=8pt
]{theorem}{Theorem}

\newmdtheoremenv[
  backgroundcolor=boxbg,
  linecolor=mapcablue,
  linewidth=1.5pt,
  topline=false, rightline=false, bottomline=false,
  innerleftmargin=12pt, innerrightmargin=12pt,
  innertopmargin=8pt, innerbottommargin=8pt
]{definition}[theorem]{Definition}

\newmdtheoremenv[
  backgroundcolor=boxbg,
  linecolor=mapcablue,
  linewidth=1.5pt,
  topline=false, rightline=false, bottomline=false,
  innerleftmargin=12pt, innerrightmargin=12pt,
  innertopmargin=8pt, innerbottommargin=8pt
]{proposition}[theorem]{Proposition}

\newmdtheoremenv[
  backgroundcolor=boxbg,
  linecolor=mapcablue,
  linewidth=1.5pt,
  topline=false, rightline=false, bottomline=false,
  innerleftmargin=12pt, innerrightmargin=12pt,
  innertopmargin=8pt, innerbottommargin=8pt
]{corollary}[theorem]{Corollary}

\newmdtheoremenv[
  backgroundcolor=remarkbg,
  linecolor=gray,
  linewidth=1pt,
  topline=false, rightline=false, bottomline=false,
  innerleftmargin=12pt, innerrightmargin=12pt,
  innertopmargin=8pt, innerbottommargin=8pt
]{remark}[theorem]{Remark}

\newcommand{\R}{\mathbb{R}}
\newcommand{\Sig}{\Sigma}

\newcommand{\tr}{\operatorname{Tr}}
\newcommand{\diag}{\operatorname{diag}}
\newcommand{\Mcal}{\mathcal{M}}

\newcommand{\tilM}{\widetilde{M}}
\newcommand{\tilSig}{\widetilde{\Sigma}}
\newcommand{\tilw}{\widetilde{w}}
\newcommand{\tillam}{\widetilde{\lambda}}

\begin{document}

\begin{center}
  {\LARGE\bfseries\color{mapcablue}
    Metric-Aware Principal Component Analysis (MAPCA):}\\[8pt]
  {\Large\bfseries\color{mapcablue}
    A Unified Framework for Scale-Invariant Representation Learning}\\[18pt]
  {\large Michael Leznik}\\[4pt]
  {\small April, 2026}
\end{center}
\vspace{8pt}
\vspace{12pt}

\begin{center}
  \textbf{\large Abstract}
\end{center}

\begin{mdframed}[backgroundcolor=remarkbg, linecolor=gray, linewidth=0.5pt,
  innerleftmargin=14pt, innerrightmargin=14pt,
  innertopmargin=10pt, innerbottommargin=10pt]
We introduce \textbf{Metric-Aware Principal Component Analysis (MAPCA)}, a unified
framework for scale-invariant representation learning based on the generalised
eigenproblem
\[
  \max_{W} \tr(W^\top \Sig W) \quad \text{subject to} \quad W^\top M W = I,
\]
where $M$ is a symmetric positive definite metric matrix. The choice of $M$
determines the representation geometry. The canonical $\beta$-family
$M(\beta) = \Sig^{\beta}$, $\beta \in [0,1]$, provides continuous spectral bias
control between standard PCA ($\beta=0$) and output whitening ($\beta=1$), with
condition number $\kappa(\beta) = (\lambda_1/\lambda_p)^{1-\beta}$ decreasing
monotonically to isotropy. The diagonal metric $M = D = \diag(\Sig)$ recovers
Invariant PCA (IPCA), a method rooted in \citet{frisch1928}'s diagonal regression,
as a distinct member of the broader framework with a uniquely strong property:
strict scale invariance under arbitrary diagonal rescaling, provable if and only if
the metric transforms as $\tilM = CMC$.

Beyond its classical interpretation, MAPCA provides a geometric language that
unifies several self-supervised learning objectives. Barlow Twins and ZCA whitening
correspond to $\beta = 1$; VICReg's variance term corresponds to the diagonal
metric. A key finding concerns W-MSE, which despite being described as a
whitening-based method corresponds to $M = \Sig^{-1}$ ($\beta = -1$) --- outside
the spectral compression range entirely and in the opposite direction. This
distinction between input and output whitening is invisible at the level of loss
functions and becomes precise only within the MAPCA framework. Theoretical results
are verified on the army cadets dataset from the original IPCA thesis.

\medskip
\noindent\textbf{Keywords:} Principal Component Analysis, metric learning, scale
invariance, spectral bias, self-supervised learning, diagonal regression,
representation learning, whitening.
\end{mdframed}

\vspace{12pt}
\vspace{12pt}

\newpage
\section{Introduction}
\label{sec:intro}

Principal Component Analysis (PCA) remains one of the most widely used tools for
dimensionality reduction and representation learning. Despite its simplicity and
effectiveness, a fundamental limitation persists: PCA is not invariant to feature
scaling. When variables differ in units or variance, the resulting principal
components can change dramatically, often leading to unstable or misleading
representations.

Two standard approaches attempt to address this issue. \emph{Covariance PCA}
operates directly on the covariance matrix and preserves variance structure but is
highly sensitive to scale. \emph{Correlation PCA} standardises features and is
invariant to rescaling, but removes all variance information by construction,
forcing every variable to contribute equally regardless of its true spread. This
creates a persistent tension between scale invariance and variance preservation.

In this work, we show that this trade-off is not intrinsic to PCA, but rather a
consequence of the Euclidean geometry implicit in its formulation. By replacing the
standard Euclidean constraint $W^\top W = I$ with a metric-aware constraint
$W^\top M W = I$ derived from the feature covariance structure, we obtain a
generalised formulation that resolves the tension. The resulting method, which we
term \textbf{Metric-Aware PCA (MAPCA)}, decouples correlation structure from scale
in a principled manner. Critically, scale invariance in this framework admits a
precise characterisation: it holds if and only if the metric $M$ transforms
equivariantly under rescaling, a condition satisfied exactly by the diagonal metric
construction and not by the general $\beta$-family at intermediate values.

The MAPCA framework has two components. The canonical $\beta$-family
$M(\beta) = \Sig^\beta$ for $\beta \in [0,1]$ provides a continuous interpolation
between standard PCA and output whitening, parameterising the degree of spectral
compression. The diagonal metric $M = D = \diag(\Sig)$ recovers Invariant PCA
(IPCA), introduced in the first author's doctoral thesis and rooted in
\citet{frisch1928}'s diagonal regression, as a distinct and uniquely invariant
member of the broader family.

Beyond its classical interpretation, MAPCA provides a geometric language for
understanding modern self-supervised learning (SSL) objectives. We show that Barlow
Twins, ZCA whitening, VICReg, and W-MSE all correspond to specific metric choices
within the MAPCA framework. In particular, W-MSE performs \emph{input whitening}
($M = \Sig^{-1}$), which amplifies spectral bias, whereas Barlow Twins performs
\emph{output whitening} ($M = \Sig$), which achieves isotropy. These are operations
in opposite spectral directions, a fact that is invisible without the MAPCA
framework.

The paper is organised as follows. Section~\ref{sec:background} establishes
notation and background. Section~\ref{sec:framework} introduces the MAPCA
framework, the $\beta$-family, and IPCA as a special case.
Section~\ref{sec:invariance} proves the scale invariance theorem and its
corollaries. Section~\ref{sec:ssl} establishes connections to SSL methods.
Section~\ref{sec:empirical} provides numerical verification. Section~\ref{sec:conclusion}
concludes.

\section{Background and Notation}
\label{sec:background}

Let $X \in \R^{n \times p}$ be a mean-centred data matrix with $n$ observations
and $p$ variables. We define the following:

\begin{itemize}[leftmargin=2em, itemsep=2pt]
  \item $\Sig \in \R^{p \times p}$ --- sample covariance matrix, symmetric
        positive semi-definite
  \item $D = \diag(\sigma_1^2, \ldots, \sigma_p^2)$ --- diagonal matrix of
        marginal variances, where $\sigma_i^2 = \Sig_{ii}$
  \item $\lambda_1 \geq \lambda_2 \geq \cdots \geq \lambda_p \geq 0$ ---
        eigenvalues of $\Sig$
  \item $V = [v_1, \ldots, v_p]$ --- corresponding orthonormal eigenvectors of
        $\Sig$
  \item $R = D^{-1/2} \Sig D^{-1/2}$ --- correlation matrix
  \item $\Mcal$ --- the set of symmetric positive definite $p \times p$ matrices
\end{itemize}

Standard PCA solves the eigenvalue problem $\Sig v = \lambda v$, equivalently:
\begin{equation}
  \max_{W} \; \tr(W^\top \Sig W) \quad \text{subject to} \quad W^\top W = I.
  \label{eq:standard_pca}
\end{equation}
The $i$-th principal component accounts for proportion $\lambda_i / \sum_j \lambda_j$
of total variance. PCA is scale-dependent: multiplying variable $j$ by scalar $c > 0$
changes the covariance matrix and yields non-equivalent components. The standard
remedy --- using the correlation matrix $R$ --- produces scale-invariant results but
discards variance structure entirely, forcing all variables to unit variance.

The present work generalises both approaches by treating the constraint matrix as a
free parameter drawn from $\Mcal$.

\section{The MAPCA Framework}
\label{sec:framework}

\subsection{General Formulation}

\begin{definition}[MAPCA Framework]
  \label{def:mapca}
  For any $M \in \Mcal$, MAPCA solves:
  \begin{equation}
    \max_{W} \; \tr(W^\top \Sig W) \quad \text{subject to} \quad W^\top M W = I.
    \label{eq:mapca}
  \end{equation}
  The choice of $M$ defines the \emph{representation geometry}. We call $M$ the
  \textbf{metric matrix}. The variance explained by the $i$-th component is the
  $i$-th eigenvalue $\lambda_i$ of the generalised eigenproblem:
  \begin{equation}
    \Sig\, w = \lambda\, M\, w.
    \label{eq:gen_eig}
  \end{equation}
  Standard PCA is the special case $M = I$.
\end{definition}

\subsection{The \texorpdfstring{$\beta$}{beta}-Family}

\begin{definition}[$\beta$-Family]
  \label{def:beta_family}
  The canonical one-parameter subfamily is obtained by setting:
  \begin{equation}
    M(\beta) = \Sig^{\beta}, \quad \beta \in [0,1],
    \label{eq:beta_family}
  \end{equation}
  where $\Sig^\beta$ denotes the matrix power via the eigendecomposition of
  $\Sig$. The effective operator is:
  \begin{equation}
    A(\beta) = M(\beta)^{-1/2} \,\Sig\, M(\beta)^{-1/2}
             = \Sig^{-\beta/2} \,\Sig\, \Sig^{-\beta/2} = \Sig^{1-\beta},
    \label{eq:effective_op}
  \end{equation}
  with eigenvalues $\lambda_i^{1-\beta}$ and eigenvectors $V$ identical to those
  of $\Sig$. The endpoints are:
  \begin{itemize}[itemsep=2pt]
    \item $\beta = 0$: $M = I$, standard covariance PCA.
    \item $\beta = 1$: $M = \Sig$, full output whitening --- all eigenvalues of
          the generalised problem equal $1$, giving isotropic output.
  \end{itemize}
\end{definition}

\begin{proposition}[Effective Operator]
  \label{prop:effective_op}
  For $M(\beta) = \Sig^\beta$, the generalised eigenproblem~\eqref{eq:gen_eig}
  reduces to the standard eigenproblem on $A(\beta) = \Sig^{1-\beta}$, with
  eigenvalues $\lambda_i^{1-\beta}$ and $w_i = \Sig^{-\beta/2} v_i$.
\end{proposition}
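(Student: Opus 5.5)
The plan is a change of variables that turns the generalised eigenproblem into a standard one, followed by reading off the spectrum from the eigendecomposition of $\Sig$. Since $M(\beta)=\Sig^\beta$ must lie in $\Mcal$, I will assume $\Sig$ is positive definite, i.e.\ $\lambda_p>0$. (For $\beta=0$ this is automatic; for $\beta>0$ it is needed for $M(\beta)$ to be invertible.) Under this assumption all powers $\Sig^{s}$, $s\in\R$, are well defined via $\Sig^{s}=V\diag(\lambda_1^{s},\dots,\lambda_p^{s})V^\top$. These powers share the eigenvectors $V$, so they commute and satisfy $\Sig^{s}\Sig^{t}=\Sig^{s+t}$. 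This functional-calculus fact is the only tool needed.

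First, substitute $w=M^{-1/2}u=\Sig^{-\beta/2}u$ into \eqref{eq:gen_eig} and left-multiply by $\Sig^{-\beta/2}$. This gives $\Sig^{-\beta/2}\Sig\Sig^{-\beta/2}u=\lambda\,\Sig^{-\beta/2}\Sig^{\beta}\Sig^{-\beta/2}u$, which simplifies to $A(\beta)u=\lambda u$ with $A(\beta)=\Sig^{1-\beta}$, matching \eqref{eq:effective_op}. The substitution is a bijection because $\Sig^{-\beta/2}$ is invertible, so the generalised eigenpairs $(\lambda,w)$ correspond exactly to the standard eigenpairs $(\lambda,u)$ of $A(\beta)$.

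Second, $A(\beta)=V\diag(\lambda_i^{1-\beta})V^\top$, so its eigenvalues are $\lambda_i^{1-\beta}$ with eigenvectors $v_i$. Hence $w_i=\Sig^{-\beta/2}v_i=\lambda_i^{-\beta/2}v_i$. For $\beta\in[0,1]$ the map $t\mapsto t^{1-\beta}$ is nondecreasing, so the ordering $\lambda_1\ge\cdots\ge\lambda_p$ carries over to $\lambda_1^{1-\beta}\ge\cdots\ge\lambda_p^{1-\beta}$. Thus the $i$-th MAPCA component corresponds to the $i$-th eigenvalue as in Definition~\ref{def:mapca}.

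Finally, I will verify the normalisation, which is the only point needing care. We have $W^\top M W=[v_i^\top\Sig^{-\beta/2}\Sig^{\beta}\Sig^{-\beta/2}v_j]_{ij}=V^\top V=I$, so $W=\Sig^{-\beta/2}V$ satisfies the constraint of \eqref{eq:mapca}. For optimality, writing $W=\Sig^{-\beta/2}U$ turns the objective into $\tr(U^\top A(\beta)U)$ subject to $U^\top U=I$. By the standard Ky Fan argument, the maximum over $k$-column $U$ is attained at the top $k$ eigenvectors of $A(\beta)$, i.e.\ the first $k$ columns of $V$. This confirms that the stated $w_i$ are the MAPCA solutions. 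I expect no real obstacle beyond the positive-definiteness caveat and the observation that $w_i$ is only a rescaling of $v_i$.
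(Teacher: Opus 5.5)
Your proposal is correct and follows the paper's route. You make the same change of variables $u = \Sig^{\beta/2} w$ and left-multiply by $\Sig^{-\beta/2}$, which reduces the problem to $\Sig^{1-\beta} u = \lambda u$. Your additions (the positive-definiteness caveat, the check $W^\top M W = I$, and the Ky Fan optimality step) go beyond the paper's proof, and your identification $u_i = v_i$, hence $w_i = \Sig^{-\beta/2} v_i = \lambda_i^{-\beta/2} v_i$, is more consistent than the paper's $u_i = \Sig^{\beta/2} v_i$, which matches only up to scaling.
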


\begin{proof}
  Substituting $M(\beta) = \Sig^\beta$ into~\eqref{eq:gen_eig} and
  pre-multiplying by $\Sig^{-\beta/2}$:
  \[
    \Sig^{-\beta/2} \,\Sig\, \Sig^{-\beta/2}\,\bigl(\Sig^{\beta/2} w\bigr)
    = \lambda\,\bigl(\Sig^{\beta/2} w\bigr).
  \]
  Setting $u = \Sig^{\beta/2} w$ gives the standard eigenproblem
  $\Sig^{1-\beta}\, u = \lambda\, u$, with eigenvalues $\lambda_i^{1-\beta}$ and
  $u_i = \Sig^{\beta/2} v_i$, hence $w_i = \Sig^{-\beta/2} v_i$.
\end{proof}

\begin{proposition}[Spectral Compression]
  \label{prop:spectral}
  Let $\kappa(\beta) = \lambda_1^{1-\beta} / \lambda_p^{1-\beta}$ denote the
  condition number of the effective operator $A(\beta)$, where
  $\lambda_1 \geq \cdots \geq \lambda_p > 0$. Then:
  \begin{enumerate}[label=(\roman*), itemsep=2pt]
    \item $\kappa(\beta)$ is monotonically decreasing in $\beta$.
    \item $\kappa(0) = \lambda_1 / \lambda_p$ --- the condition number of $\Sig$
          (standard PCA).
    \item $\kappa(1) = 1$ --- full isotropy (whitening).
  \end{enumerate}
\end{proposition}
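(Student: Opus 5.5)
We prove the proposition by rewriting the condition number as a single exponential in $\beta$. By Proposition~\ref{prop:effective_op}, the effective operator $A(\beta)=\Sig^{1-\beta}$ has eigenvalues $\lambda_i^{1-\beta}$. For $\beta\in[0,1]$ the exponent $1-\beta$ is nonnegative and every $\lambda_i>0$, so the map $t\mapsto t^{1-\beta}$ is nondecreasing on $(0,\infty)$. Hence the largest and smallest eigenvalues of $A(\beta)$ are $\lambda_1^{1-\beta}$ and $\lambda_p^{1-\beta}$, which justifies calling $\kappa(\beta)$ the condition number of $A(\beta)$. We then write
\[
  \kappa(\beta)=\left(\frac{\lambda_1}{\lambda_p}\right)^{1-\beta}=\exp\bigl((1-\beta)\log r\bigr),\qquad r=\lambda_1/\lambda_p\ge 1 .
\]

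For (i), since $\log r\ge 0$, the exponent $(1-\beta)\log r$ is a nonincreasing affine function of $\beta$. Composing it with the increasing function $\exp$ shows that $\kappa$ is monotonically nonincreasing. Explicitly,
\[
  \kappa'(\beta)=-\log r\,\kappa(\beta)\le 0 .
\]
The decrease is strict exactly when $\lambda_1>\lambda_p$. When $\lambda_1=\lambda_p$, $\kappa\equiv 1$ is constant, so in the statement "decreasing" must be read in the weak sense, or under the assumption of a non-scalar $\Sig$. We will state this caveat explicitly.

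Items (ii) and (iii) follow by direct substitution. Setting $\beta=0$ gives $\kappa(0)=r=\lambda_1/\lambda_p$, which is the condition number of $\Sig$. Setting $\beta=1$ gives $\kappa(1)=r^0=1$. This is consistent with Definition~\ref{def:beta_family}, since $A(1)=\Sig^0=I$ is isotropic.

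The only point needing care is the identification of the extreme eigenvalues of $A(\beta)$, which relies on $1-\beta\ge 0$. For $\beta>1$ the ordering would reverse, and $\kappa$ as defined would fall below $1$ and no longer equal the true condition number. This is why the statement restricts to $\beta\in[0,1]$.
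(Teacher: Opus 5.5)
Your proof is correct and takes the same route as the paper. You write $\kappa(\beta) = (\lambda_1/\lambda_p)^{1-\beta}$, note that it decreases as the exponent $1-\beta$ shrinks, and evaluate it at $\beta=0$ and at $\beta=1$, where $A(1)=\Sig^0=I$. Your version is slightly more careful than the paper's: the paper silently assumes $\lambda_1>\lambda_p$ and never checks that $\lambda_1^{1-\beta}$ and $\lambda_p^{1-\beta}$ are the extreme eigenvalues of $A(\beta)$, whereas you handle both points.
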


\begin{proof}
  Since $\lambda_1 > \lambda_p > 0$, the ratio
  $(\lambda_1/\lambda_p)^{1-\beta}$ is strictly decreasing in $\beta$ as the
  exponent $1 - \beta$ compresses the spectral gap monotonically toward zero. At
  $\beta = 1$, $A(1) = \Sig^0 = I$, so all eigenvalues equal $1$ and
  $\kappa(1) = 1$.
\end{proof}

Table~\ref{tab:kappa} illustrates the spectral compression numerically on the
army cadets dataset ($\lambda_1 = 75.93$, $\lambda_p = 1.16$).
Figure~\ref{fig:geometry} provides a geometric interpretation: as $\beta$
increases from $0$ to $1$, the metric unit ball $\{w : w^\top M w = 1\}$
continuously deforms from a circle (standard PCA, blind to data geometry) to an
ellipse matching the data covariance (whitening, fully absorbing it).

\begin{table}[H]
  \centering
  \caption{Condition number $\kappa(\beta)$ of the effective operator as a
    function of $\beta$, army cadets dataset.}
  \label{tab:kappa}
  \begin{tabular}{ccc}
    \toprule
    $\beta$ & $M(\beta)$ & $\kappa(\beta)$ \\
    \midrule
    0.00 & $I$             & 65.65 \\
    0.25 & $\Sig^{0.25}$   & 23.06 \\
    0.50 & $\Sig^{0.5}$    &  8.10 \\
    0.75 & $\Sig^{0.75}$   &  2.85 \\
    1.00 & $\Sig$          &  1.00 \\
    \bottomrule
  \end{tabular}
\end{table}

\begin{figure}[t]
  \centering
  \includegraphics[width=\textwidth]{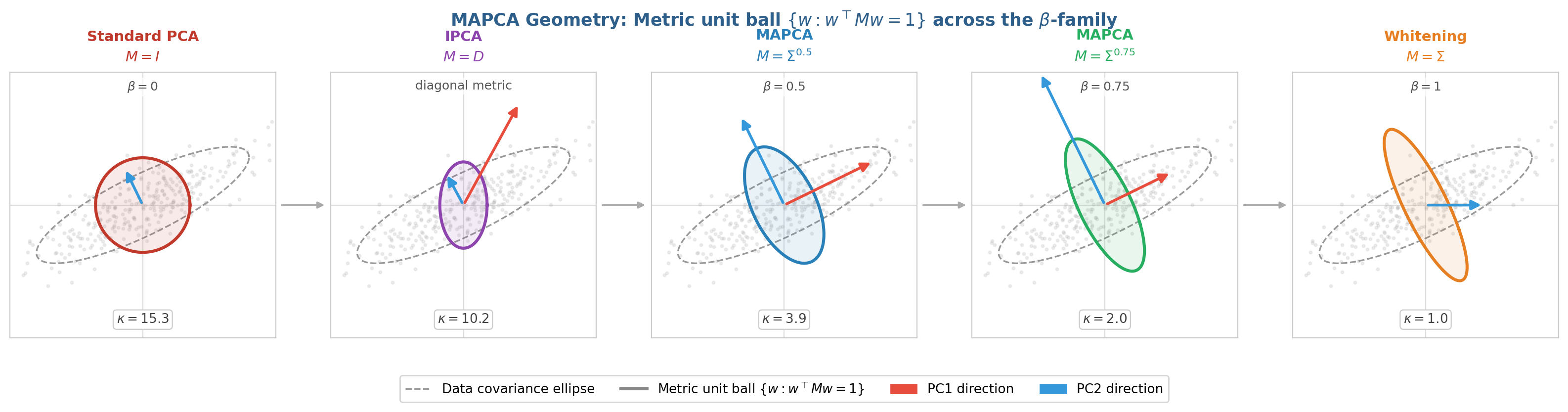}
  \caption{
    \textbf{Geometry of the MAPCA family.}
    Each panel shows a fixed two-dimensional dataset (grey scatter) together
    with its covariance ellipse (dashed grey), the metric unit ball
    $\{w : w^\top M w = 1\}$ (solid coloured ellipse), and the two principal
    component directions (red and blue arrows).
    The metric unit ball encodes what the method treats as ``unit length'': a
    circle means all directions are treated equally; an ellipse aligned with the
    data means the metric has absorbed the full covariance structure.
    \textbf{(1) Standard PCA} ($M = I$, $\beta = 0$): the ball is a circle,
    blind to the data geometry; the condition number $\kappa = 15.3$ reflects
    full spectral bias.
    \textbf{(2) IPCA} ($M = D$): the ball is an axis-aligned ellipse, correcting
    for marginal variances without rotating the metric; this is the diagonal
    approximation to $\Sig$ made geometric.
    \textbf{(3)--(4) MAPCA} ($\beta = 0.5$, $\beta = 0.75$): the ball
    continuously rotates and contracts toward the data ellipse as $\beta$
    increases, compressing the spectrum and reducing $\kappa$ monotonically
    (Proposition~\ref{prop:spectral}).
    \textbf{(5) Whitening} ($M = \Sig$, $\beta = 1$): the ball matches the data
    ellipse exactly, enforcing isotropic output; $\kappa = 1$ by construction,
    but all variance structure is removed and eigenvectors are non-unique
    (Remark~\ref{rem:degenerate}).
    IPCA occupies a geometrically distinct position: its axis-aligned ball
    corrects for scale without rotating the metric, and it is the unique member
    of the MAPCA family satisfying strict scale invariance under arbitrary
    diagonal rescaling (Theorem~\ref{thm:invariance}, Corollary~\ref{cor:ipca_inv}).
  }
  \label{fig:geometry}
\end{figure}

\subsection{The Diagonal Metric and IPCA}

\begin{definition}[Diagonal Metric --- IPCA]
  \label{def:ipca}
  Setting $M = D = \diag(\sigma_1^2, \ldots, \sigma_p^2)$ recovers
  \textbf{Invariant PCA (IPCA)}, introduced in the first author's doctoral thesis
  and rooted in \citet{frisch1928}'s diagonal regression. The effective operator
  is:
  \[
    A_D = D^{-1/2}\, \Sig\, D^{-1/2},
  \]
  whose eigenvalues are those of the correlation matrix $R$. The matrix $D$ does
  not lie on the $\beta$-family curve $\Sig^\beta$ in general. It is the diagonal
  approximation to $\Sig$, correcting for marginal variances without rotating the
  metric.
\end{definition}

\begin{remark}[IPCA within the broader framework]
  \label{rem:ipca_position}
  IPCA and the $\beta$-family are both instances of metric-aware representation
  learning unified under MAPCA. The primary virtue of IPCA is not spectral
  compression per se --- its condition number $\kappa_D$ is close to $\kappa(0)$
  in general --- but \emph{strict scale invariance} under arbitrary diagonal
  rescaling, established in Theorem~\ref{thm:invariance}.
\end{remark}

\section{Scale Invariance}
\label{sec:invariance}

We now establish the central theoretical result: a precise characterisation of when
MAPCA solutions are invariant to feature rescaling.

\begin{theorem}[Scale Invariance of MAPCA]
  \label{thm:invariance}
  Let $X \in \R^{n \times p}$ be mean-centred with covariance $\Sig$, and let
  $C = \diag(c_1, \ldots, c_p)$ with $c_i > 0$ be an invertible rescaling.
  Denote by $\tilSig = C\Sig C$ the covariance of the rescaled data $XC$.

  The MAPCA solutions satisfy $\tillam_i = \lambda_i$ and
  $\tilw_i = C^{-1} w_i$ for all $i$ \textbf{if and only if}:
  \begin{equation}
    \tilM = CMC. \tag{$*$} \label{eq:invariance_cond}
  \end{equation}
\end{theorem}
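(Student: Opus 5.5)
The plan is to prove the two directions separately, working directly with the generalised eigenproblem~\eqref{eq:gen_eig} for the rescaled data, $\tilSig\,\tilw = \tillam\,\tilM\,\tilw$ with $\tilSig = C\Sig C$. Since the MAPCA solution of~\eqref{eq:mapca} is the set of generalised eigenpairs normalised by $W^\top M W = I$, both the spectrum and the eigenvectors are determined by the pencil $(\Sig, M)$. Throughout I will take the eigenvalues distinct, or else read ``$\tilw_i = C^{-1}w_i$'' as an equality of eigenspaces.

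\emph{Sufficiency.} Assume $\tilM = CMC$ and substitute $\tilw = C^{-1}w$. Then
\[
  \tilSig\,\tilw = C\Sig C C^{-1} w = C\Sig w = \lambda\, C M w = \lambda\, C M C\, C^{-1} w = \lambda\,\tilM\,\tilw ,
\]
so every eigenpair $(\lambda_i, w_i)$ of $(\Sig, M)$ yields the eigenpair $(\lambda_i, C^{-1}w_i)$ of $(\tilSig, \tilM)$. Because $w \mapsto C^{-1}w$ is a bijection, the two spectra coincide. The constraint is also preserved:
\[
  \tilw_i^\top \tilM\, \tilw_j = w_i^\top C^{-1} C M C\, C^{-1} w_j = w_i^\top M w_j = \delta_{ij}.
\]
Finally the objective is invariant, since $\tilw^\top \tilSig\,\tilw = w^\top \Sig w$. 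Hence the optimiser $\widetilde{W} = C^{-1}W$ attains the same values, which proves the ``if'' direction.

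\emph{Necessity.} Assume $\tillam_i = \lambda_i$ and $\tilw_i = C^{-1}w_i$ for all $i$, with $\widetilde W^\top \tilM \widetilde W = I$. Write $W = [w_1,\dots,w_p]$. Because $M \in \Mcal$ and $W^\top M W = I$, the matrix $W$ is square and invertible, and $M = W^{-\top}W^{-1}$. Under the hypothesis $\widetilde W = C^{-1}W$ is likewise invertible, and the normalisation gives
\[
  \tilM = \widetilde W^{-\top}\widetilde W^{-1} = (C^{-1}W)^{-\top}(C^{-1}W)^{-1} = C\,W^{-\top}W^{-1}\,C = CMC .
\]
This is exactly~\eqref{eq:invariance_cond}. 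The argument uses only the constraint; the eigen-equation is not needed, though it is consistent with the result.

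The main obstacle is the well-posedness of the ``only if'' direction. The statement only makes sense if MAPCA returns a full, $M$-orthonormal set of $p$ vectors, so that $\tilM$ is recovered from its normalised eigenbasis. If only $k<p$ components are retained, $\tilM$ is pinned down only on their span, and the converse fails as stated. I will therefore state the full-rank convention explicitly. For repeated eigenvalues, $W$ is not unique, so the hypothesis must be read as saying that some admissible $\widetilde W$ equals $C^{-1}W$; the reconstruction of $\tilM$ then goes through for that choice. A secondary point is $\Sig$ merely semidefinite: zero eigenvalues cause no problem, because the argument never inverts $\Sig$, only $M$ and $W$.
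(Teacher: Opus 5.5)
Your proof is correct. The sufficiency half is the paper's argument: substitute $\tilw_i = C^{-1}w_i$ into the rescaled pencil and cancel a factor of $C$. You also check that $C^{-1}W$ satisfies the rescaled constraint, which the paper leaves implicit.

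The converse takes a genuinely different route. The paper gives only a one-line sketch, and the natural reading of ``substitution and rearrangement over the eigenvector basis'' is an argument through the eigen-equation. Substituting the hypotheses into $\tilSig\,\tilw_i = \tillam_i\,\tilM\,\tilw_i$ and using $\Sig w_i = \lambda_i M w_i$ gives $\lambda_i\,(CMC - \tilM)\,C^{-1}w_i = 0$. Since the $C^{-1}w_i$ span $\R^p$, this forces $\tilM = CMC$.

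You instead use only the normalisation, reading $M = W^{-\top}W^{-1}$ off $W^\top M W = I$ and likewise for $\tilM$. This buys two things:
\begin{itemize}
\item Your route never divides by $\lambda_i$, so it survives a singular $\Sig$, which the paper's background allows. In the eigen-equation route a zero eigenvalue yields $0=0$ and leaves $\tilM$ unconstrained along $C^{-1}w_i$, so that argument tacitly needs $\lambda_p>0$.
\item It shows the hypothesis $\tillam_i = \lambda_i$ is superfluous for the converse.
\end{itemize}

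The eigen-equation route has its own advantage. It is homogeneous in $\tilw_i$, so it would still work if $\tilw_i$ were only known to be proportional to $C^{-1}w_i$. Yours needs the exact $\tilM$-normalisation: with $\widetilde W = C^{-1}WA$ for a diagonal $A$ you would only get $\tilM = CW^{-\top}A^{-2}W^{-1}C$. Since MAPCA builds the normalisation into the problem, this costs you nothing here.

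Your caveat that the converse needs all $p$ components is exactly what the paper assumes when it appeals to ``the eigenvector basis of $\R^p$''.
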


\begin{proof}
  \textbf{($\Leftarrow$)} Assume~\eqref{eq:invariance_cond}. The MAPCA
  problem on the rescaled data is:
  \[
    C\Sig C\, \tilw_i = \tillam_i\, CMC\, \tilw_i.
  \]
  Pre-multiplying by $C^{-1}$ and substituting $\tilw_i = C^{-1} w_i$:
  \[
    \Sig\, w_i = \tillam_i\, M\, w_i.
  \]
  Comparing with the original eigenproblem $\Sig\, w_i = \lambda_i\, M\, w_i$
  gives $\tillam_i = \lambda_i$ and $\tilw_i = C^{-1} w_i$.

  \smallskip
  \textbf{($\Rightarrow$)} Assuming $\tillam_i = \lambda_i$ and
  $\tilw_i = C^{-1} w_i$ for all $i$, substitution and rearrangement over the
  eigenvector basis of $\R^p$ yields $\tilM = CMC$.
\end{proof}

\begin{corollary}[Strict Invariance of IPCA]
  \label{cor:ipca_inv}
  For $M = D = \diag(\sigma_1^2, \ldots, \sigma_p^2)$,
  condition~\eqref{eq:invariance_cond} is satisfied exactly:
  \[
    \widetilde{D} = \diag(c_1^2 \sigma_1^2, \ldots, c_p^2 \sigma_p^2) = CDC.
  \]
  Hence IPCA satisfies strict scale invariance. This is the multivariate
  generalisation of \citet{samuelson1942}'s Property~3 --- invariance under
  simple dimensional or scale change in any variable. It is confirmed numerically
  to machine precision ($\varepsilon \approx 10^{-14}$; see
  Section~\ref{sec:empirical}).
\end{corollary}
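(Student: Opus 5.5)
The statement to prove is Corollary~\ref{cor:ipca_inv}. The plan is to reduce it to the sufficiency direction ($\Leftarrow$) of Theorem~\ref{thm:invariance}. That direction says that if the metric used on the rescaled data equals $CMC$, then the generalised eigenvalues are unchanged and the eigenvectors transform as $\tilw_i = C^{-1} w_i$. So the whole task is one identity: the IPCA metric computed from the rescaled data $XC$ equals $CDC$.

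First I will make precise what $\widetilde{D}$ means. IPCA is a data-dependent rule $M = \diag(\Sig)$, so on the rescaled data it produces $\widetilde{D} = \diag(\tilSig) = \diag(C\Sig C)$. Since $C$ is diagonal, the $(i,j)$ entry of $C\Sig C$ is $c_i c_j \Sig_{ij}$. In particular the diagonal entries are $c_i^2 \sigma_i^2$, so
\[
\widetilde{D} = \diag(c_1^2\sigma_1^2,\ldots,c_p^2\sigma_p^2).
\]
On the other side, $CDC$ is a product of diagonal matrices, hence diagonal with entries $c_i \sigma_i^2 c_i$. The two matrices therefore agree entrywise, giving $\widetilde{D} = CDC$, which is condition~\eqref{eq:invariance_cond}.

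Next I will check the hypotheses needed to apply Theorem~\ref{thm:invariance}. The metric must lie in $\Mcal$, so $D$ must be positive definite. This requires $\sigma_i^2 > 0$, i.e.\ no constant variables, which I will state as a standing assumption. Then $\widetilde{D} = CDC$ is also positive definite, since each $c_i > 0$. Applying ($\Leftarrow$) gives $\tillam_i = \lambda_i$ and $\tilw_i = C^{-1} w_i$.

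For completeness I will also check the normalisation. We have $\tilw_i^\top \widetilde{D}\, \tilw_i = w_i^\top C^{-1} C D C\, C^{-1} w_i = w_i^\top D w_i = 1$, so the constraint is preserved. I will add a remark on the projected scores: $(XC)\tilw_i = X w_i$, so the IPCA outputs are literally identical after rescaling. This is the content of Samuelson's Property~3 in the multivariate setting.

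The main obstacle is conceptual rather than computational. The key step is recognising that $\diag(\cdot)$ commutes with congruence by a diagonal matrix, $\diag(C\Sig C) = C\,\diag(\Sig)\,C$. This is exactly what fails for $\Sig^\beta$ with $0<\beta<1$, because $(C\Sig C)^\beta \neq C\Sig^\beta C$ in general. I will note this contrast briefly, but the corollary itself needs only the entrywise computation above. The numerical confirmation cited in the statement will be deferred to Section~\ref{sec:empirical}.
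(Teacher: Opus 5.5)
Your proposal is correct and matches the paper's argument. Both prove the corollary by the entrywise identity $\diag(C\Sig C) = C\,\diag(\Sig)\,C$, i.e.\ $\widetilde{D} = CDC$, and then apply the ($\Leftarrow$) direction of Theorem~\ref{thm:invariance}. Your extra checks are sound refinements that the paper leaves implicit: that $\sigma_i^2>0$ so $D\in\Mcal$, that the normalisation $\tilw_i^\top \widetilde{D}\,\tilw_i = 1$ is preserved, and that $(XC)\tilw_i = Xw_i$.
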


\begin{corollary}[Failure of Strict Invariance for the $\beta$-Family]
  \label{cor:beta_fails}
  For $M(\beta) = \Sig^\beta$ with $\beta \in (0,1)$,
  condition~\eqref{eq:invariance_cond} requires
  $\widetilde{\Sig^\beta} = C\Sig^\beta C$. But since $\tilSig = C\Sig C$:
  \[
    \tilSig^\beta = (C\Sig C)^\beta \neq C\Sig^\beta C \quad \text{in general,}
  \]
  with equality only when $C = cI$ (uniform rescaling) or
  $\beta \in \{0, 1\}$. For intermediate $\beta$, eigenvectors transform in a
  manner depending on the full spectral structure of $\Sig$, not merely the
  marginal scales $c_i$.
\end{corollary}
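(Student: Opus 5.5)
The plan is to read condition~\eqref{eq:invariance_cond} of Theorem~\ref{thm:invariance} literally for the metric rule $M \mapsto M(\beta)$. The rescaled metric is $\tilM = \tilSig^\beta = (C\Sig C)^\beta$, so we must compare $(C\Sig C)^\beta$ with $C\Sig^\beta C$. I would split the argument into three parts.

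\textbf{Step 1: the positive case $\beta=1$.} Here the check is immediate: $(C\Sig C)^1 = C\Sig C$. So~\eqref{eq:invariance_cond} holds, and Theorem~\ref{thm:invariance} gives invariance.

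\textbf{Step 2: failure for every $\beta\in(0,1)$ and every non-identity $C$, via a diagonal $\Sig$.} Choose $\Sig = \diag(s_1,\dots,s_p)$ with $s_i>0$. Everything is diagonal, so the matrix powers act entrywise:
\[
  (C\Sig C)^\beta = \diag\bigl(c_i^{2\beta} s_i^\beta\bigr),
  \qquad
  C\Sig^\beta C = \diag\bigl(c_i^{2} s_i^\beta\bigr).
\]
These agree if and only if $c_i^{2\beta} = c_i^2$ for all $i$. For $\beta\in(0,1)$ and $c_i>0$, this forces $c_i = 1$ for all $i$, i.e.\ $C=I$. This already establishes the displayed inequality ``in general''. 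To exhibit the phenomenon for a genuinely non-diagonal $\Sig$, I would add a $2\times 2$ example: take $\beta=1/2$ and suppose $(C\Sig C)^{1/2} = C\Sig^{1/2}C$. Squaring both sides and using uniqueness of the positive-definite square root gives
\[
  C\Sig^{1/2}C^2\Sig^{1/2}C = C\Sig C,
  \qquad\text{hence}\qquad
  \Sig^{1/2}C^2\Sig^{1/2} = \Sig,
\]
and since $\Sig$ is invertible, $C^2 = I$, so $C = I$. The same squaring trick works for any $\beta = 1/q$ by raising both sides to the $q$-th power.

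\textbf{Step 3: reconcile the exceptional cases listed in the statement.} This is where I expect the main obstacle, because the literal matrix identity does \emph{not} hold at those cases.

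\begin{itemize}
  \item At $\beta=0$, we get $(C\Sig C)^0 = I$, whereas $CIC = C^2$.
  \item For $C=cI$, we get $(c^2\Sig)^\beta = c^{2\beta}\Sig^\beta$, whereas $C\Sig^\beta C = c^2\Sig^\beta$.
\end{itemize}

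In both situations the two sides differ by a scalar factor ($C^2$ reduces to the scalar $c^2$ when $C=cI$). So the correct formulation is \emph{projective}: $\tilM$ and $CMC$ are proportional. Proportionality rescales the generalised eigenvalues $\tillam_i$ by a constant and leaves the eigen-directions $\tilw_i \propto C^{-1}w_i$ unchanged.

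I would therefore prove the exceptional cases in that weakened sense. For $C=cI$, note that $\tilSig^\beta$ and $C\Sig^\beta C$ are both scalar multiples of $\Sig^\beta$, and apply the ($\Leftarrow$) computation of Theorem~\ref{thm:invariance} up to the resulting scalar. For $\beta = 0$ with $C$ uniform, the same applies. For $\beta=0$ with non-uniform $C$, Step 2 with $\beta=0$ instead shows that even proportionality fails, since $c_i^0 = 1 \ne c_i^2$ when the $c_i$ are distinct; I would flag that the statement's ``$\beta\in\{0,1\}$'' should be read as ``$\beta=1$'', or ``$\beta=0$ with $C=cI$''.

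\textbf{Closing remark.} The final sentence of the corollary, that eigenvectors depend on the full spectrum, would be justified by Proposition~\ref{prop:effective_op}. The rescaled solution is $\tilw_i = \tilSig^{-\beta/2}\tilde v_i$, where $\tilde v_i$ are the eigenvectors of $C\Sig C$. These are not determined by the $c_i$ and the $v_i$ alone, except at $\beta = 1$.
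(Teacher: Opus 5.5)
Your proposal is correct and follows the same reduction as the paper: condition $(*)$ with $M=\Sigma^\beta$ amounts to asking whether $(C\Sigma C)^\beta = C\Sigma^\beta C$. The paper gives no separate proof. It only asserts the inequality ``in general'' inline, so your diagonal witness is strictly more than the paper supplies.

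Your Step~3 diagnosis is also right and goes beyond the paper. The listed equality cases are false as literal identities. At $\beta=0$ one is comparing $I$ with $C^2$. For $C=cI$ one is comparing $c^{2\beta}\Sigma^\beta$ with $c^2\Sigma^\beta$. In the latter case exact invariance in the sense of the Theorem also fails, since $\tilde\lambda_i = c^{2(1-\beta)}\lambda_i$. Only the projective version you formulate survives: directions are preserved and eigenvalues are rescaled by a common factor. The paper's own Remark on the invariance hierarchy lists standard PCA ($\beta=0$) as having no invariance, which confirms that the clause $\beta\in\{0,1\}$ should read $\beta=1$ (or $\beta=0$ only projectively and only for $C=cI$).

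One soft spot is your claim that the squaring trick extends to every $\beta=1/q$. Put $S=\Sigma^{1/q}$ and $E=C^2$. For $q\ge 3$, raising to the $q$-th power and cancelling the outer factors leaves $S^{q-2}=ESE\cdots ESE$. For example, $S=ESE$ when $q=3$ and $S^2=ESESE$ when $q=4$. These relations no longer collapse to $E=I$ by invertibility alone.

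A uniform fix works for every $\beta\neq 1$ and every positive definite $\Sigma$. Since $C^2\Sigma=C(C\Sigma C)C^{-1}$, functional calculus gives
\[
C^{-1}(C\Sigma C)^\beta C^{-1}=\Sigma\,C(C\Sigma C)^{\beta-1}C^{-1}=\Sigma\,(C^2\Sigma)^{\beta-1}.
\]
Hence the identity is equivalent to $(C^2\Sigma)^{\beta-1}=\Sigma^{\beta-1}$. Both sides are diagonalisable with positive spectrum, and $x\mapsto x^{\beta-1}$ is injective on $(0,\infty)$. It follows that $C^2\Sigma=\Sigma$, i.e.\ $C=I$.

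This gives the sharp form of the corollary: for every $\beta\neq 1$, including $\beta=0$, equality holds only when $C=I$. It makes both the diagonal example and the $q$-th power argument unnecessary.

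Two minor points. Your $\beta=1/2$ argument is already fully general, not just a $2\times 2$ example. And in the $\beta=0$ discussion, proportionality fails whenever the $c_i$ are not all equal, not only when they are distinct.
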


\begin{remark}[Invariance Hierarchy]
  \label{rem:hierarchy}
  Theorem~\ref{thm:invariance} and its corollaries establish the following
  hierarchy within the MAPCA framework:
  \begin{itemize}[itemsep=2pt]
    \item \textbf{IPCA} ($M = D$): strict scale invariance for arbitrary diagonal
          rescaling while retaining non-trivial spectral structure.
    \item \textbf{Whitening} ($\beta = 1$): strict scale invariance, trivially,
          since all variance is removed.
    \item \textbf{Standard PCA} ($\beta = 0$): no invariance.
    \item \textbf{$\beta$-family} ($\beta \in (0,1)$): invariance only under
          uniform rescaling $C = cI$.
  \end{itemize}
  IPCA is therefore the unique member of the MAPCA family that achieves strict
  scale invariance under \emph{arbitrary} diagonal rescaling while retaining
  non-trivial spectral structure.
\end{remark}

\begin{remark}[Degenerate case at $\beta = 1$]
  \label{rem:degenerate}
  At the whitening endpoint $\beta = 1$, the effective operator $A(1) = I$ and all
  eigenvalues of the generalised problem equal $1$. The eigenvectors are not
  uniquely determined --- any orthonormal basis in the $\Sig$-metric is valid.
  Componentwise invariance comparisons at $\beta = 1$ are therefore degenerate and
  should not be interpreted.
\end{remark}

\section{Connections to Self-Supervised Learning}
\label{sec:ssl}

\subsection{Spectral Collapse in SSL}

A central challenge in self-supervised representation learning is
\emph{dimensional collapse} --- the tendency of learned representations to occupy a
low-dimensional subspace. If $Z \in \R^{n \times d}$ is the representation matrix,
collapse corresponds to $\Sig_Z$ having a small number of dominant eigenvalues with
the remainder near zero.

Recent SSL methods address this through explicit covariance regularisation.
\citet{barlow2021} penalise off-diagonal elements of the cross-correlation matrix,
encouraging $\Sig_Z \to I$. \citet{vicreg2022} add a variance term preventing
eigenvalue collapse. \citet{wmse2021} apply whitening as a preprocessing step
within the loss. Each can be interpreted as pushing the representation geometry
toward isotropy, but none provides explicit control over the degree of spectral
compression, and their geometric relationships are not apparent from the loss
functions alone.

\subsection{Metric Correspondences}

The following remarks establish the correspondence between SSL methods and MAPCA
metric choices. These are interpretive rather than formal equivalences, as SSL
methods operate on mini-batch statistics with augmented views rather than a fixed
covariance matrix.

\begin{remark}[Barlow Twins and output whitening]
  \label{rem:barlow}
  The Barlow Twins objective encourages the cross-correlation matrix
  $\mathcal{C} \to I$ on standardised representations. This is equivalent to
  selecting $M = \Sig$ in the MAPCA framework ($\beta = 1$): the output covariance
  is forced to $I$. Full whitening is hardcoded as the implicit geometric target.
\end{remark}

\begin{remark}[VICReg and the diagonal metric]
  \label{rem:vicreg}
  The VICReg variance term enforces a lower bound on marginal variances, preventing
  collapse along individual dimensions without enforcing full isotropy. This
  corresponds interpretively to $M = D$ in the MAPCA framework --- the IPCA metric
  --- correcting marginal variance structure without rotating the metric. The MAPCA
  framework explains geometrically why this particular regularisation has the effect
  it does.
\end{remark}

\begin{remark}[W-MSE and input whitening --- a geometric distinction]
  \label{rem:wmse}
  W-MSE applies $\Sig^{-1/2}$ as an input preprocessing step. A direction $u$ in
  the whitened space corresponds to direction $v = \Sig^{1/2} u$ in the original
  space. The whitened-space orthonormality constraint $u^\top u = 1$ translates to:
  \[
    v^\top \Sig^{-1} v = u^\top \Sig^{1/2} \Sig^{-1} \Sig^{1/2} u = u^\top u = 1,
  \]
  so the implied metric in the original space is $M = \Sig^{-1}$.

  Crucially, $M = \Sig^{-1}$ lies \emph{outside} the spectral compression range
  $\beta \in [0,1]$ entirely (it corresponds to $\beta = -1$). The effective
  operator $A = \Sig^2$ has eigenvalues $\lambda_i^2$, giving condition number
  $(\lambda_1/\lambda_p)^2$ --- strictly \emph{worse} than standard PCA.

  W-MSE performs \emph{input whitening}; Barlow Twins performs \emph{output
  whitening}. These are operations in opposite spectral directions. This
  distinction is invisible at the level of loss functions and becomes precise only
  within the MAPCA framework.
\end{remark}

\subsection{Metric Correspondence Table}

Table~\ref{tab:ssl} summarises the correspondence between SSL methods and MAPCA
metric choices.

\begin{table}[H]
  \centering
  \caption{Metric correspondence between SSL methods and the MAPCA framework.}
  \label{tab:ssl}
  \begin{tabular}{llcc}
    \toprule
    Method & Implicit metric $M$ & $\beta$ & $\kappa$ behaviour \\
    \midrule
    Standard PCA         & $I$               & $0$         & $\lambda_1/\lambda_p$ \\
    IPCA                 & $D = \diag(\Sig)$ & ---         & $\kappa_D$; strict invariance \\
    VICReg (var.\ term)  & $D = \diag(\Sig)$ & ---         & Marginal correction only \\
    Barlow Twins         & $\Sig$            & $1$         & $1$ (isotropic) \\
    ZCA Whitening        & $\Sig$            & $1$         & $1$ (isotropic) \\
    W-MSE                & $\Sig^{-1}$       & $-1$ (outside) & $(\lambda_1/\lambda_p)^2$ (amplified) \\
    \bottomrule
  \end{tabular}
\end{table}

\begin{remark}[MAPCA is not a whitening method]
  \label{rem:not_whitening}
  Whitening corresponds to $\beta = 1$ --- one endpoint of the $\beta$-family.
  Standard PCA corresponds to $\beta = 0$. MAPCA is the continuous spectral bias
  control mechanism connecting them. Existing SSL methods are special cases or
  approximations within this family. MAPCA does not propose a new loss function ---
  it provides the geometric language that unifies existing approaches and makes
  their implicit assumptions explicit.
\end{remark}

\section{Empirical Illustration}
\label{sec:empirical}

\subsection{The Army Cadets Dataset}

We revisit the dataset from \citet{healy1995}, analysed in the original IPCA
thesis: measurements of height, weight, and chest circumference for 10 army
cadets, available in both metric (cm, kg) and imperial (inch, lb) units. The
rescaling factors are $c_1 = 0.394$ (cm$\to$inch), $c_2 = 2.205$
(kg$\to$lb), $c_3 = 0.394$ (cm$\to$inch), giving
$C = \diag(0.394,\, 2.205,\, 0.394)$.

This dataset was chosen for transparency: the rescaling is exact, every number is
verifiable by hand, and it cleanly exposes the failure modes of standard PCA while
confirming the theoretical predictions of Section~\ref{sec:invariance}.

\subsection{Results}

We apply four methods to both datasets. Under strict scale invariance
(Theorem~\ref{thm:invariance}), eigenvalues should be identical across unit
systems, and PC1 coefficients should satisfy the ratio
$\tilde{w}_1 / w_1 = C^{-1} = (2.538,\, 0.454,\, 2.538)^\top$.

\medskip
\noindent\textbf{Standard PCA ($\beta = 0$, $M = I$).}
The first eigenvalue changes from $75.93$ (metric) to $193.00$ (imperial) ---
a $154\%$ increase driven by the inflated variance of weight in pounds
($2.205^2 \approx 4.86$). The PC1 coefficient ratio $(-0.134,\, 1.442,\, 0.225)$
does not recover $C^{-1}$. Scale dependence is severe.

\medskip
\noindent\textbf{IPCA ($M = D$).}
Eigenvalues are identical: $(\lambda_1, \lambda_2, \lambda_3) = (1.911,\, 1.058,\, 0.031)$
for both unit systems. The PC1 coefficient ratio is exactly $(2.538,\, 0.454,\, 2.538)$,
recovering $C^{-1}$ to machine precision ($\varepsilon \approx 10^{-14}$).
Corollary~\ref{cor:ipca_inv} is confirmed.

\medskip
\noindent\textbf{MAPCA ($\beta = 0.5$, $M = \Sig^{0.5}$).}
Eigenvalues differ across unit systems ($8.71$ vs.\ $13.89$ for $\lambda_1$),
and the PC1 ratio does not recover $C^{-1}$. This confirms
Corollary~\ref{cor:beta_fails}: strict invariance fails for intermediate $\beta$
under non-uniform rescaling.

\medskip
\noindent\textbf{Full whitening ($\beta = 1$, $M = \Sig$).}
All eigenvalues equal $1$ for both datasets by construction. Eigenvector
comparisons are degenerate (Remark~\ref{rem:degenerate}).

\medskip
Table~\ref{tab:results} summarises the verification.

\begin{table}[H]
  \centering
  \caption{Scale invariance verification across four MAPCA methods,
    army cadets dataset.}
  \label{tab:results}
  \begin{tabular}{lcccc}
    \toprule
    Method & $\lambda_1$ (metric) & $\lambda_1$ (imperial) & Equal? & Ratio $\approx C^{-1}$? \\
    \midrule
    Standard PCA   & 75.933 & 193.000 & No  & No \\
    IPCA ($M=D$)   &  1.911 &   1.911 & Yes\checkmark & Yes ($\varepsilon\approx 10^{-14}$)\checkmark \\
    $\beta = 0.5$  &  8.714 &  13.892 & No (Cor.~\ref{cor:beta_fails})
                                              & No (Cor.~\ref{cor:beta_fails}) \\
    Whitening      &  1.000 &   1.000 & Yes (trivial) & Degenerate (Rem.~\ref{rem:degenerate}) \\
    \bottomrule
  \end{tabular}
\end{table}

\begin{remark}[On the empirical design]
  The army cadets dataset is intentionally small. The goal is transparency, not
  statistical power. Every number is verifiable by hand. The $\beta = 0.5$ failure
  of strict invariance confirms rather than contradicts the theory --- it is the
  prediction of Corollary~\ref{cor:beta_fails}. The dataset's reappearance from the
  original thesis is deliberate: it connects this paper to its intellectual origin
  and demonstrates that the MAPCA framework subsumes and extends the original
  contribution.
\end{remark}

\section{Conclusion}
\label{sec:conclusion}

We have introduced MAPCA, a framework that unifies a family of PCA-based
representation methods under a single generalised eigenproblem with metric matrix
$M$. The $\beta$-family $M(\beta) = \Sig^\beta$ provides continuous spectral bias
control with condition number $(\lambda_1/\lambda_p)^{1-\beta}$ decreasing
monotonically from standard PCA to isotropy. The diagonal metric $M = D$ recovers
IPCA as the unique member of the MAPCA family achieving strict scale invariance
under arbitrary diagonal rescaling while retaining non-trivial spectral structure.

The invariance hierarchy of Theorem~\ref{thm:invariance} is the central theoretical
contribution. Scale invariance holds if and only if the metric transforms as
$\tilM = CMC$, a condition satisfied exactly by IPCA and trivially by whitening,
but not by intermediate members of the $\beta$-family. This upgrades the original
thesis contribution from an empirical observation to a theorem.

The SSL correspondence reveals that existing methods implicitly select specific
points in the MAPCA family. The most striking finding concerns W-MSE: despite being
described alongside Barlow Twins and ZCA as a whitening-based SSL method, the MAPCA
framework reveals it performs input whitening ($M = \Sig^{-1}$, $\beta = -1$),
which amplifies spectral bias, whereas Barlow Twins and ZCA perform output
whitening ($M = \Sig$, $\beta = 1$), which achieves isotropy. These are operations
in opposite spectral directions, invisible without the MAPCA framework.

Several open directions arise naturally. First, principled selection of $\beta$ ---
perhaps via the condition number of the estimated $\Sig$ or downstream task geometry
--- would strengthen practical utility. Second, extension to nonlinear settings
where $\Sig$ is replaced by a kernel matrix or learned covariance is a natural next
step. Third, the $\beta$-family $M(\beta) = \Sig^\beta$ is a geodesic from $I$ to
$\Sig$ in the Riemannian geometry of the positive definite cone, suggesting a deeper
geometric interpretation that we leave for subsequent investigation. Finally,
whether the input whitening regime $\beta < 0$ has utility in anomaly detection ---
where rare directions carry disproportionate signal --- is an empirical question
worth exploring.

\section*{Acknowledgements}

The IPCA construction at the heart of this work was introduced in the first
author's doctoral thesis, supervised by \textbf{Chris Tofallis} at the University
of Hertfordshire. The thesis established the diagonal regression formulation, the
generalised eigenproblem representation, and the scale invariance properties of
IPCA. The present paper provides the theoretical generalisation that the original
work anticipated but lacked the language to express: IPCA is not merely a
scale-invariant alternative to PCA, but a specific and uniquely characterised
member of a continuous family of metric-aware representations. We dedicate this
paper to Chris Tofallis in recognition of that foundational contribution.

\bibliographystyle{plainnat}

\end{document}